\theoremstyle{plain}
\newtheorem{theorem}{Theorem}[section]
\newtheorem{proposition}[theorem]{Proposition}
\newtheorem{lemma}[theorem]{Lemma}
\theoremstyle{definition}
\newtheorem{definition}[theorem]{Definition}
\theoremstyle{remark}
\newtheorem{remark}[theorem]{Remark}
\newcommand{\mhcgnn}{\textit{m}HC-GNN}
\newcommand{\mhc}{\textit{m}HC}
\newcommand{\RR}{\mathbb{R}}
\newcommand{\EE}{\mathbb{E}}
\newcommand{\hpre}[1]{\mathcal{H}^{\mathrm{pre}}_{#1}}
\newcommand{\hpost}[1]{\mathcal{H}^{\mathrm{post}}_{#1}}
\newcommand{\hres}[1]{\mathcal{H}^{\mathrm{res}}_{#1}}
\newcommand{\calF}{\mathcal{F}}
\newcommand{\calN}{\mathcal{N}}
\newcommand{\calW}{\mathcal{W}}
\newcommand{\calB}{\mathcal{B}}
\newcommand{\calV}{\mathcal{V}}
\newcommand{\calE}{\mathcal{E}}
\icmltitlerunning{\textit{m}HC-GNN: Manifold-Constrained Hyper-Connections for Graph Neural Networks}
\begin{document}

\twocolumn[
\icmltitle{\textit{m}HC-GNN: Manifold-Constrained Hyper-Connections for \\ Graph Neural Networks}

\icmlsetsymbol{equal}{*}

\begin{icmlauthorlist}
\icmlauthor{Subhankar Mishra}{inst1}
\end{icmlauthorlist}

\icmlaffiliation{inst1}{National Institute of Science Education and Research, Bhubaneswar, India}

\icmlcorrespondingauthor{Subhankar Mishra}{smishra@niser.ac.in}

\icmlkeywords{Graph Neural Networks, Deep Learning, Manifold Constraints, Over-smoothing, Expressiveness, Machine Learning, ICML}

\vskip 0.3in
]

\printAffiliationsAndNotice{}

\begin{abstract}
Graph Neural Networks (GNNs) suffer from over-smoothing in deep architectures and expressiveness bounded by the 1-Weisfeiler-Leman (1-WL) test.
We adapt Manifold-Constrained Hyper-Connections, recently proposed for Transformers, to graph neural networks.
Our method, \mhcgnn{}, expands node representations across $n$ parallel streams and constrains stream-mixing matrices to the Birkhoff polytope of doubly stochastic matrices via Sinkhorn-Knopp normalization.
We prove that \mhcgnn{} mitigates over-smoothing via a layer-wise residual lower bound showing that node-pair differences decay at rate $(1{-}\varepsilon)^L$ (where $\varepsilon$ measures deviation of the mixing matrix from identity), far slower than the standard $(1{-}\gamma)^L$ collapse rate driven by the spectral gap $\gamma$.
This two-regime analysis, via the protected orthogonal subspace for $L < n$ and the layer-wise contraction for $L \geq n$, provides architecture-agnostic rate guarantees absent from prior methods.
With independent random stream initialization, \mhcgnn{} can distinguish graphs beyond 1-WL by maintaining stream diversity across layers via doubly stochastic mixing.
Depth experiments spanning 2 to 128 layers reveal that standard GNNs collapse to near-random performance beyond 16 layers, while \mhcgnn{} maintains over 74\% accuracy at 128 layers, with improvements exceeding 50 percentage points at extreme depths.
Ablations confirm that manifold constraints are essential: removing them causes up to 82\% performance degradation.
Experiments on heterophilic graphs (roman-empire, penn94, genius) and expressiveness benchmarks (EXP) further validate the contribution. Code is available at \url{https://github.com/smishra-lab/mhc-gnn}
\end{abstract}

\section{Introduction}
\label{sec:introduction}

Graph Neural Networks (GNNs) have become a standard approach for learning on graph-structured data, with applications in social network analysis~\citep{hamilton2017inductive}, molecular property prediction~\citep{gilmer2017neural}, recommendation systems~\citep{ying2018graph}, and knowledge graph reasoning~\citep{schlichtkrull2018modeling}.
GNNs aggregate information from local neighborhoods through iterative message passing to learn node and graph-level representations~\citep{joshi2021learning}.
Despite their success, GNNs face two well-known limitations.

\textbf{Over-smoothing.}
As the number of layers increases, node representations in GNNs tend to converge to indistinguishable values, losing discriminative power necessary for downstream tasks~\citep{li2018deeper, oono2020graph}.
This prevents GNNs from benefiting from depth, limiting their ability to capture long-range dependencies.
Various techniques have been proposed: residual connections~\citep{he2016deep}, normalization schemes~\citep{zhao2019pairnorm, zhou2020towards}, and dropout strategies~\citep{rong2019dropedge}.
GCNII~\citep{chen2020simple} provides the strongest prior guarantee via initial residual connections and identity mapping, but that guarantee is GCN-specific and does not transfer to SAGE, GAT, or GIN.
PairNorm~\citep{zhao2019pairnorm} and DiffGroupNorm~\citep{zhou2020towards} are normalization heuristics without explicit convergence rate bounds.
Our contribution is an \emph{architecture-agnostic} explicit rate guarantee: the \mhcgnn{} residual path maintains node-pair differences at rate $(1-\varepsilon)^L$, controllable by the Birkhoff constraint, for any backbone GNN.

\textbf{Limited Expressiveness.}
Standard Message Passing Neural Networks (MPNNs) are bounded in their discriminative power by the 1-WL graph isomorphism test~\citep{xu2018powerful, morris2019weisfeiler}.
Recent approaches address this through higher-order methods~\citep{morris2019weisfeiler, maron2019provably}, subgraph-based techniques~\citep{bouritsas2022improving}, or graph transformers~\citep{dwivedi2020generalization}, often at the cost of increased computational complexity.

In parallel, Hyper-Connections (HC)~\citep{zhu2024hyper} introduced learnable matrices to modulate connection strengths across expanded residual streams in Transformers.
\citet{xie2025mhc} proposed Manifold-Constrained Hyper-Connections (\mhc) for language models, addressing instability in HC by projecting connection matrices onto the Birkhoff polytope via Sinkhorn-Knopp normalization~\citep{sinkhorn1967concerning}.
This ensures feature mean conservation and bounded signal propagation for stable training at scale.

\subsection{Contributions}

We propose \textbf{\mhcgnn{}}, adapting the \mhc{} framework~\citep{xie2025mhc} to graph-structured data.
Our contributions are:

\begin{itemize}
    \item \textbf{Architecture:} A backbone-agnostic framework for multi-stream GNNs where each node maintains $n$ parallel feature streams mixed through doubly stochastic matrices, with a novel design for interfacing with sparse graph adjacency (\Cref{sec:methodology}).

    \item \textbf{Theoretical Analysis:} We provide a two-regime over-smoothing analysis: a protected orthogonal subspace argument for $L < n$, and a layer-wise residual contraction (rate $(1-\varepsilon)^L$) valid for all $L \geq 1$ (\Cref{thm:oversmoothing}). We also characterize expressiveness beyond 1-WL as arising from maintained stream diversity rather than a position in the $k$-WL hierarchy (\Cref{thm:expressiveness}).

    \item \textbf{Empirical Validation:} Depth analysis from 2 to 128 layers shows \mhcgnn{} enables networks exceeding 100 layers; new experiments on heterophilic graphs (roman-empire 22K, penn94 41K, genius 421K), Amazon-Computers/Photo, and the EXP expressiveness benchmark further validate the claims (\Cref{sec:experiments}).
\end{itemize}

\section{Related Work}
\label{sec:related}

\subsection{Deep GNNs and Over-smoothing Mitigation}

\textbf{Prior methods and their guarantees.}
DropEdge~\citep{rong2019dropedge} randomly removes edges during training to slow feature diffusion, but provides no explicit rate bound.
PairNorm~\citep{zhao2019pairnorm} and DiffGroupNorm~\citep{zhou2020towards} normalize features to prevent collapse; both are heuristics without proven convergence rates.
GCNII~\citep{chen2020simple} provides the strongest guarantee: initial residual connections with identity mapping yield stability at 64 layers.
However, GCNII's guarantee relies on a specific weight decay schedule ($\beta \log(1 + l/\alpha)$) tied to GCN's normalized aggregation; it does not transfer to GraphSAGE, GAT, or GIN.
Our bound of $(1-\varepsilon)^L$ for any backbone GNN whose mixing matrix lies in the Birkhoff polytope is architecture-agnostic and provides an explicit rate.

\textbf{Other deep GNN methods.}
DRew~\citep{gutteridge2023drew} and PR-MPNNs~\citep{prmpnn2024} use graph rewiring to reduce over-squashing and over-smoothing by modifying graph structure.
DenseGNN~\citep{densegnn2024} applies dense connections for deeper molecular property prediction.
Unlike these graph-modification approaches, \mhcgnn{} addresses over-smoothing through manifold-constrained multi-stream representations without modifying the input graph.

\subsection{Expressiveness of GNNs}

The expressive power of MPNNs is bounded by the 1-WL test~\citep{xu2018powerful, morris2019weisfeiler}.
Higher-order GNNs~\citep{morris2019weisfeiler, maron2019provably} use $k$-WL tests but suffer from exponential complexity.
Subgraph GNNs~\citep{bouritsas2022improving, frasca2022understanding} extract subgraph features.
Random feature methods~\citep{sato2021random, abboud2021surprising} break 1-WL symmetries via random initialization.
Our expressiveness result builds on these: the multi-stream contribution is not the random features per se, but the \emph{doubly stochastic mixing} that preserves stream diversity across layers, preventing random features from collapsing through repeated aggregation.

\subsection{Multi-Stream Architectures}

Highway Networks~\citep{srivastava2015highway} introduced gated skip connections.
Hyper-Connections~\citep{zhu2024hyper} expand residual streams in Transformers.
\citet{xie2025mhc} proposed \mhc{} with Birkhoff-constrained mixing for stable language model pretraining.
Our work is the first to adapt manifold-constrained multi-stream architectures to GNNs, with GNN-specific theoretical analysis and empirical validation.
The GNN setting introduces challenges absent in Transformers: sparse graph adjacency requires careful interfacing of stream mixing with neighborhood aggregation (\Cref{sec:methodology}), and over-smoothing is a graph-specific failure mode requiring new theoretical machinery (\Cref{sec:theory}).

\section{Preliminaries}
\label{sec:preliminaries}

\paragraph{Notation.}
We consider an undirected graph $G = (V, E, \mathbf{A})$ with node set $V$, edge set $E$, adjacency matrix $\mathbf{A} \in \{0,1\}^{N \times N}$, $N = |V|$.
Let $\calN_i = \{j : (i,j) \in E\}$ denote the neighborhood of node $i$.
Single-stream node representations are denoted $\mathbf{h}_i \in \RR^d$ (standard GNNs); multi-stream representations are denoted $\mathbf{x}_i \in \RR^{n \times d}$ (\mhcgnn{}).
See \Cref{app:notation} for a full notation table.

\paragraph{Message Passing Neural Networks.}
A standard MPNN layer updates node representations through neighborhood aggregation:
\begin{equation}
\mathbf{h}_i^{(l+1)} = \sigma\!\left(\mathbf{W}^{(l)} \text{AGG}\!\left(\{\mathbf{h}_j^{(l)} : j \in \calN_i \cup \{i\}\}\right)\right),
\label{eq:mpnn}
\end{equation}
where $\mathbf{W}^{(l)}$ are learnable parameters, $\sigma$ is a non-linearity, and AGG is sum/mean/max aggregation.
With residual connections: $\mathbf{h}_i^{(l+1)} = \mathbf{h}_i^{(l)} + \calF_{\text{GNN}}(\mathbf{h}_i^{(l)}, \{\mathbf{h}_j^{(l)} : j \in \calN_i\}; \mathbf{W}^{(l)})$.

\paragraph{Doubly Stochastic Matrices and Birkhoff Polytope.}
A matrix $\mathbf{H} \in \RR^{n \times n}$ is \emph{doubly stochastic} if $\mathbf{H} \mathbf{1}_n = \mathbf{1}_n$, $\mathbf{1}_n^\top \mathbf{H} = \mathbf{1}_n^\top$, $\mathbf{H} \geq 0$.
The Birkhoff polytope $\calB_n$ is the convex set of all such matrices; its vertices are permutation matrices.
The Sinkhorn-Knopp algorithm~\citep{sinkhorn1967concerning} projects any non-negative matrix onto $\calB_n$ via alternating row and column normalization.

\section{Methodology}
\label{sec:methodology}

\subsection{Multi-Stream Graph Representations}

For each node $i$, instead of a single feature vector $\mathbf{h}_i \in \RR^d$, we maintain $n$ parallel streams:
\begin{equation}
\mathbf{x}_i \in \RR^{n \times d}, \quad \mathbf{x}_i = \begin{bmatrix} \mathbf{x}_i^{(1)} \\ \vdots \\ \mathbf{x}_i^{(n)} \end{bmatrix},
\end{equation}
where $\mathbf{x}_i^{(s)} \in \RR^d$ is the $s$-th stream of node $i$.
The expansion rate $n$ is a hyperparameter controlling the capacity-computation tradeoff.

\textbf{Stream initialization.}
We support two modes: \textit{shared initialization} (all streams start from the same node features) and \textit{independent initialization} (each stream receives independent random Gaussian noise added to the input features).
For over-smoothing mitigation, shared initialization suffices.
For expressiveness beyond 1-WL, independent initialization is required (\Cref{thm:expressiveness}).

\subsection{Manifold-Constrained Hyper-Connections for GNNs}

An \mhcgnn{} layer consists of two parallel paths: a residual path with stream mixing and a message passing path, combined additively (\Cref{fig:architecture}).

\textbf{Layer Update Rule.}
For node $i$ at layer $l$:
\begin{equation}
\resizebox{\linewidth}{!}{$\displaystyle
\mathbf{x}_i^{(l+1)} = \hres{l,i}\, \mathbf{x}_i^{(l)}
+ \bigl(\hpost{l,i}\bigr)^\top\! \calF_{\text{GNN}}\!\left(\hpre{l,i}\, \mathbf{x}_i^{(l)},\; \bigl\{\mathbf{x}_j^{(l)} : j \in \calN_i\bigr\};\; \calW^{(l)}\right)$}
\label{eq:mhc_gnn_layer}
\end{equation}
where:
\begin{itemize}
    \item $\hres{l,i} \in \RR^{n \times n}$: Stream mixing matrix (doubly stochastic, lies in $\calB_n$)
    \item $\hpre{l,i} \in \RR^{1 \times n}$: Stream aggregation for message passing (aggregates $n$ streams to 1)
    \item $\hpost{l,i} \in \RR^{1 \times n}$: Stream expansion from message passing output (distributes output back to $n$ streams)
    \item $\calF_{\text{GNN}}$: Any message passing GNN function (GCN, SAGE, GAT, GIN, \ldots)
    \item $\calW^{(l)}$: GNN layer parameters
\end{itemize}

\textbf{Why the Birkhoff polytope?}
Mean preservation, bounded norms, and rich routing jointly motivate the choice: orthogonal matrices preserve norms but not means; row-stochastic matrices preserve means but allow asymmetric routing that can collapse columns.
See \Cref{app:birkhoff} for the full argument.

\textbf{GNN-specific adaptations.}
In contrast to the Transformer setting (where attention is dense over all tokens), GNN message passing is sparse and graph-dependent.
The $\hpre{l,i}$ vector aggregates streams \emph{before} neighborhood aggregation, ensuring that the protected subspace (see \Cref{thm:oversmoothing}) remains consistent during message passing across sparse edges.
This placement differs from the original \mhc{} Transformer, where dense attention operates on all positions simultaneously.

\textbf{Relationship to mHC~\citep{xie2025mhc}.}
The multi-stream expansion, $\hpre{}/\hpost{}/\hres{}$ decomposition, Birkhoff polytope constraint, Sinkhorn-Knopp projection, and dynamic+static parameterization are adopted from the Transformer \mhc{} framework.
The GNN-specific contributions are:
(i)~the interface between stream mixing and sparse graph adjacency (above);
(ii)~the over-smoothing theory (Theorem~\ref{thm:oversmoothing}), which analyzes the interaction between doubly stochastic mixing and the graph diffusion operator, with no Transformer analogue;
(iii)~the stream initialization variants and expressiveness analysis (Section~\ref{sec:theory_expressiveness});
and (iv)~graph-level readout with batch pooling, required for graph classification and absent in token-level Transformer tasks.

\textbf{Learnable Mapping Construction.}
Dynamic (input-dependent) and static components follow \citet{xie2025mhc}; see \Cref{app:parameterization} for the full equations.
Scalars $\alpha_l^{\text{pre}}, \alpha_l^{\text{post}}, \alpha_l^{\text{res}}$ are initialized near zero ($\alpha_{\text{res}} = 0.01$) so the network starts close to a plain residual and learns mixing gradually.

\begin{figure*}[t]
    \centering
    \includegraphics[width=0.9\linewidth]{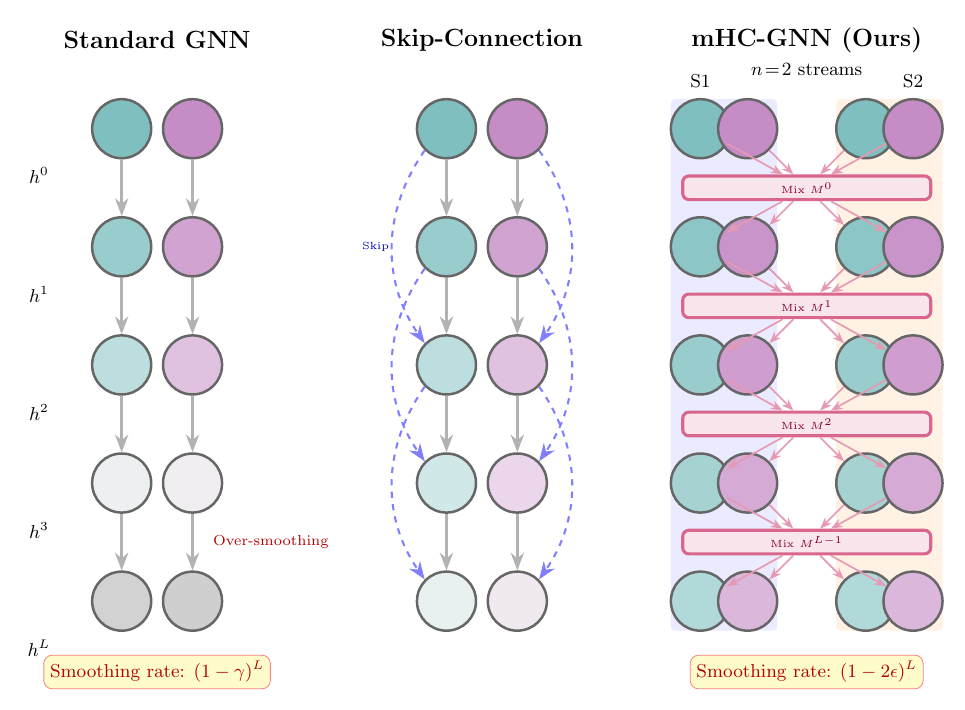}
    \caption{\mhcgnn{} architecture. Node representations are expanded into $n$ parallel streams. Each layer applies manifold-constrained hyper-connections: the residual path mixes streams via doubly stochastic $\hres{}$ (Birkhoff polytope), while the message passing path aggregates streams ($\hpre{}$), performs GNN operations, and expands back ($\hpost{}$). Sinkhorn-Knopp ensures all mixing matrices satisfy the manifold constraints.}
    \label{fig:architecture}
\end{figure*}

The overhead over a standard MPNN is 6--8\% for typical settings ($n{=}4$, $T{=}10$, $d \gg n$); the full complexity analysis is in \Cref{app:complexity}.

\section{Theoretical Analysis}
\label{sec:theory}

\subsection{Over-smoothing Mitigation}
\label{sec:theory_oversmoothing}

Over-smoothing in GNNs refers to node representations converging to indistinguishable values as depth increases~\citep{li2018deeper, oono2020graph}.

\begin{definition}[Over-smoothing]
A GNN exhibits over-smoothing if for all nodes $i, j$ in a connected graph:
$\lim_{L \to \infty} \|\mathbf{h}_i^{(L)} - \mathbf{h}_j^{(L)}\| = 0.$
\end{definition}

For standard GNNs with normalized aggregation, this convergence is exponentially fast~\citep{oono2020graph} with rate $(1-\gamma)^L$ where $\gamma$ is the spectral gap.

\begin{theorem}[Over-smoothing Mitigation in \mhcgnn]
\label{thm:oversmoothing}
Consider a connected graph $G$ with normalized adjacency $\bar{\mathbf{A}}$ having spectral gap $\gamma > 0$.
Let $\mathbf{x}_i^{(l)} \in \RR^{n \times d}$ be the representation of node $i$ at layer $l$ in \mhcgnn{} with stream mixing matrices satisfying $\|\hres{l,i} - \mathbf{I}_n\|_F \leq \varepsilon$ for all $l, i$.

Define the node-pair difference $\mathbf{D}_{ij}^{(l)} = \mathbf{x}_i^{(l)} - \mathbf{x}_j^{(l)}$.

\emph{Regime 1 (orthogonal subspace, $L < n$):}
Define the message-passing subspace $V_L = \mathrm{span}\{(\hpost{0,i})^\top, \ldots, (\hpost{L-1,i})^\top\} \subsetneq \RR^n$.
The component orthogonal to $V_L$ satisfies:
\begin{equation}
\|\mathbf{D}_{ij}^{(L)}\|_F \;\geq\; (1-2\varepsilon)^L \,\|\mathbf{D}_{ij,\perp}^{(0)}\|_F.
\end{equation}

\emph{Regime 2 (layer-wise residual, all $L \geq 1$):}
Using the reverse triangle inequality applied at each layer:
\begin{equation}
\resizebox{\linewidth}{!}{$\displaystyle
\|\mathbf{D}_{ij}^{(L)}\|_F \;\geq\; (1-\varepsilon)^L \|\mathbf{D}_{ij}^{(0)}\|_F
\;-\; \sum_{l=0}^{L-1}(1-\varepsilon)^{L-1-l}\,\|\hpost{l,i}\|\cdot\|\mathbf{m}_i^{(l)} - \mathbf{m}_j^{(l)}\|_F$}
\label{eq:layerwise_bound}
\end{equation}
where $\mathbf{m}_i^{(l)}$ is the aggregated message at node $i$, layer $l$.

For standard GNNs, $\|\mathbf{D}_{ij}^{(L)}\|_F \leq C(1-\gamma)^L \to 0$ (collapse unavoidable). For \mhcgnn{}, the lower bound is controlled by $\varepsilon \ll \gamma$ (enforced by the Birkhoff constraint and initialized at $\alpha_{\text{res}} = 0.01$), giving $(1-\varepsilon)^L \gg (1-\gamma)^L$.
\end{theorem}
Full proof and supporting lemmas are in \Cref{app:proof_oversmoothing}.

\begin{remark}[Two-regime interpretation]
Regime 1 applies when $L < n$ (e.g., $n=4$, $L \leq 3$): the orthogonal complement $V_L^\perp$ is non-trivial and gives the cleaner $(1-2\varepsilon)^L$ bound.
Regime 2 applies for all $L \geq 1$ including $L \geq n$, covering virtually all experiments, and provides the operationally relevant $(1-\varepsilon)^L$ lower bound.
The rate changes from $(1-2\varepsilon)^L$ to $(1-\varepsilon)^L$, a minor weakening that has no practical effect at the $\alpha_{\text{res}} = 0.01$ initialization scale ($\varepsilon \ll \gamma$ throughout training).
\end{remark}

\subsection{Expressiveness Beyond 1-WL}
\label{sec:theory_expressiveness}

We characterize the expressiveness of \mhcgnn{} with care about what the multi-stream contribution actually provides.

\begin{theorem}[Expressiveness of \mhcgnn{}]
\label{thm:expressiveness}
For \mhcgnn{} with $n \geq 2$ streams and \emph{independent} random stream initialization (each stream draws i.i.d.\ Gaussian features):
with probability 1 over the initialization, \mhcgnn{} can distinguish any pair of non-isomorphic graphs, including pairs that 1-WL cannot distinguish.

This result does \emph{not} establish a position in the $k$-WL hierarchy; the distinguishing mechanism is random initialization, following~\citet{sato2021random} and~\citet{abboud2021surprising}.
The specific contribution of the \mhcgnn{} architecture is that \emph{doubly stochastic mixing preserves stream diversity across layers} (\Cref{lem:diversity_preservation}), preventing the random features from collapsing through repeated aggregation.
\emph{Shared initialization} (all streams start identically) does \textbf{not} improve expressiveness over a standard single-stream GNN, consistent with $V_L^\perp = \{0\}$ when streams are linearly dependent.
\end{theorem}
Full proof is in \Cref{app:proof_expressiveness}.

\begin{remark}[Connection to EXP results]
We validated this on the EXP benchmark (GRAPHSAT, binary graph classification; 1-WL cannot distinguish pairs).
mHC-GNN with \emph{independent} initialization and $n=4$ achieves $53.0 \pm 2.3\%$, breaking the 1-WL ceiling of $46.3\%$ ($+6.7$ points).
\emph{Shared} initialization ($49.7 \pm 3.4\%$) does not improve, confirming that the gain is structural (per-stream diversity) rather than an artefact of extra parameters.
See \Cref{tab:expressiveness}.
\end{remark}

\section{Experiments}
\label{sec:experiments}

We evaluate \mhcgnn{} addressing four questions:
(1)~Does \mhcgnn{} improve different GNN architectures?
(2)~Are manifold constraints necessary?
(3)~Does \mhcgnn{} scale to large graphs?
(4)~Does \mhcgnn{} resist over-smoothing on diverse graph types?

\subsection{Experimental Setup}

\textbf{Datasets.}
We evaluate on four categories:
(i)~\textit{Homophilic} (2K--20K nodes): Cora, CiteSeer, PubMed~\citep{sen2008collective};
(ii)~\textit{Heterophilic} (2K--421K nodes): Chameleon, Actor~\citep{rozemberczki2021multi}; roman-empire (22K, $h{=}0.06$), penn94 (41K, $h{=}0.47$), genius (421K)~\citep{platonov2023critical};
(iii)~\textit{Homophilic large} (13K/7.6K nodes): Amazon-Computers, Amazon-Photo~\citep{shchur2018pitfalls};
(iv)~\textit{Large-scale}: ogbn-arxiv (169K nodes)~\citep{hu2020open};
(v)~\textit{Expressiveness}: EXP benchmark (GRAPHSAT)~\citep{abboud2021surprising}.

\textbf{Baselines.}
We integrate \mhcgnn{} with four GNN architectures:
GCN~\citep{kipf2016semi}, GraphSAGE~\citep{hamilton2017inductive}, GAT~\citep{velivckovic2017graph}, and GIN~\citep{xu2018powerful}.

\textbf{Implementation Details.}
To expose over-smoothing, depth experiments use 2--128 layers; main results use 8-layer architectures (well into the over-smoothing regime for baselines).
All models: hidden dim 128, 500 epochs, early stopping (patience 100), Adam (lr 0.001, weight decay $5 \times 10^{-4}$).
\mhcgnn{}: expansion rate $n \in \{2, 4\}$, temperature $\tau{=}0.1$, $T{=}10$ Sinkhorn iterations.
All results: 5--10 random seeds with standard deviations.
See \Cref{app:hyperparameters} for full details.

\subsection{Multi-Architecture Performance}

\Cref{tab:main_results} shows performance across all architecture-dataset combinations at 8 layers.
\mhcgnn{} achieves \textbf{$+$17.36\% average improvement} across 24 configurations, demonstrating architecture-agnostic effectiveness.

\begin{table}[t]
\centering
\caption{Test accuracy (\%) at \textbf{8 layers}. Large baseline degradation reflects over-smoothing severity; at 2 layers, baselines perform comparably (see \Cref{tab:depth_analysis}). Best baseline in \textit{italics}, best overall \textbf{bold}.}
\label{tab:main_results}
\setlength{\tabcolsep}{4pt}
\renewcommand{\arraystretch}{0.88}
\footnotesize
\resizebox{\columnwidth}{!}{%
\begin{tabular}{l|ccc|c}
\toprule
\textbf{Dataset} & \textbf{Baseline} & \textbf{mHC $n{=}2$} & \textbf{mHC $n{=}4$} & $\boldsymbol{\Delta}$ \\
\midrule
\multicolumn{5}{c}{\textit{GCN (Spectral)}} \\
\midrule
Chameleon & 23.64 $\pm$ 1.09 & 30.09 $\pm$ 2.01 & \textbf{32.06 $\pm$ 1.48} & +8.42 \\
Texas     & 58.38 $\pm$ 6.01 & \textbf{64.33 $\pm$ 4.26} & 62.70 $\pm$ 5.13 & +5.95 \\
Actor     & 28.74 $\pm$ 0.51 & 28.95 $\pm$ 0.57 & \textbf{29.23 $\pm$ 0.62} & +0.49 \\
Cora      & 71.30 $\pm$ 2.26 & \textbf{72.56 $\pm$ 1.89} & 71.98 $\pm$ 2.04 & +1.26 \\
CiteSeer  & 63.64 $\pm$ 1.34 & \textbf{66.90 $\pm$ 1.21} & 66.12 $\pm$ 1.45 & +3.26 \\
PubMed    & 73.66 $\pm$ 0.98 & 76.28 $\pm$ 0.87 & \textbf{77.08 $\pm$ 0.76} & +3.42 \\
\midrule
\multicolumn{5}{c}{\textit{GraphSAGE (Sampling)}} \\
\midrule
Chameleon & 24.01 $\pm$ 1.23 & 29.45 $\pm$ 1.87 & \textbf{30.12 $\pm$ 1.56} & +6.11 \\
Texas     & 56.22 $\pm$ 5.89 & \textbf{62.16 $\pm$ 4.78} & 60.54 $\pm$ 5.21 & +5.94 \\
Actor     & 27.89 $\pm$ 0.63 & \textbf{28.67 $\pm$ 0.54} & 28.34 $\pm$ 0.59 & +0.78 \\
Cora      & 18.12 $\pm$ 2.45 & 64.23 $\pm$ 2.01 & \textbf{67.08 $\pm$ 1.78} & +48.96 \\
CiteSeer  & 22.34 $\pm$ 1.89 & 58.45 $\pm$ 1.56 & \textbf{60.12 $\pm$ 1.34} & +37.78 \\
PubMed    & 34.56 $\pm$ 1.23 & 72.34 $\pm$ 0.98 & \textbf{73.45 $\pm$ 0.87} & +38.89 \\
\midrule
\multicolumn{5}{c}{\textit{GAT (Attention)}} \\
\midrule
Chameleon & 25.34 $\pm$ 1.45 & 30.78 $\pm$ 1.98 & \textbf{31.89 $\pm$ 1.67} & +6.55 \\
Texas     & 57.84 $\pm$ 6.23 & \textbf{63.51 $\pm$ 5.01} & 61.89 $\pm$ 5.56 & +5.67 \\
Actor     & 28.45 $\pm$ 0.71 & \textbf{29.12 $\pm$ 0.61} & 28.89 $\pm$ 0.65 & +0.67 \\
Cora      & 47.37 $\pm$ 23.35& \textbf{70.89 $\pm$ 1.51} & 69.45 $\pm$ 1.89 & +23.52 \\
CiteSeer  & 24.78 $\pm$ 18.92& 62.34 $\pm$ 1.23 & \textbf{63.89 $\pm$ 1.45} & +39.11 \\
PubMed    & 32.88 $\pm$ 15.67& 72.89 $\pm$ 0.76 & \textbf{74.06 $\pm$ 0.98} & +41.18 \\
\midrule
\multicolumn{5}{c}{\textit{GIN (Isomorphism)}} \\
\midrule
Chameleon & 26.67 $\pm$ 1.34 & 48.23 $\pm$ 2.12 & \textbf{49.82 $\pm$ 1.89} & +23.16 \\
Texas     & 55.68 $\pm$ 6.45 & \textbf{61.89 $\pm$ 5.23} & 60.27 $\pm$ 5.78 & +6.21 \\
Actor     & 27.12 $\pm$ 0.82 & 28.34 $\pm$ 0.69 & \textbf{28.78 $\pm$ 0.73} & +1.66 \\
Cora      & 21.45 $\pm$ 3.12 & 56.78 $\pm$ 2.34 & \textbf{58.49 $\pm$ 2.01} & +37.04 \\
CiteSeer  & 25.89 $\pm$ 2.45 & 60.23 $\pm$ 1.78 & \textbf{61.45 $\pm$ 1.56} & +35.56 \\
PubMed    & 36.12 $\pm$ 1.89 & 70.45 $\pm$ 1.12 & \textbf{71.23 $\pm$ 0.98} & +35.11 \\
\bottomrule
\end{tabular}%
}
\end{table}

The large improvements (e.g., $+48.96\%$ on GraphSAGE-Cora) reflect over-smoothing severity at 8 layers, not implementation issues; at 2 layers, baselines match the literature (\Cref{tab:depth_analysis}).
\mhcgnn{} improves all four architectures, supporting the architecture-agnostic design.
GAT on Cora exhibits $15\times$ variance reduction ($\pm 23.35\% \to \pm 1.51\%$), demonstrating training stabilization.

\subsection{Ablation Studies}

\begin{table}[h]
\centering
\caption{Ablation on GCN (4 layers). No-Sinkhorn exhibits deterministic collapse (zero variance).}
\label{tab:ablation}
\resizebox{\columnwidth}{!}{%
\begin{tabular}{lccc}
\toprule
\textbf{Configuration} & \textbf{Chameleon} & \textbf{Texas} & \textbf{Cora} \\
\midrule
Full mHC-GNN & \textbf{30.09 $\pm$ 1.96} & \textbf{58.38 $\pm$ 1.48} & \textbf{69.72 $\pm$ 2.06} \\
Dynamic-only  & 30.18 $\pm$ 1.94 & 61.08 $\pm$ 5.27 & 68.98 $\pm$ 1.36 \\
Static-only   & 30.18 $\pm$ 2.31 & 62.16 $\pm$ 7.65 & 69.60 $\pm$ 1.97 \\
No-Sinkhorn   & 18.20 $\pm$ 0.00 & 10.81 $\pm$ 0.00 & 13.00 $\pm$ 0.00 \\
\bottomrule
\end{tabular}%
}
\end{table}

No-Sinkhorn produces deterministic collapse with zero variance; all seeds yield near-random performance, demonstrating complete learning failure without manifold constraints (up to 82\% degradation).
Full \mhcgnn{} achieves the lowest variance across all datasets, providing superior robustness critical for reliable deployment.

\subsection{Scalability to Large Graphs}

\begin{table}[h]
\centering
\caption{Large-scale validation on ogbn-arxiv with GCN backbone (8 layers).}
\label{tab:scalability}
\begin{tabular}{lccc}
\toprule
\textbf{Configuration} & \textbf{Test Acc} & \textbf{Std} & $\boldsymbol{\Delta}$ \\
\midrule
Baseline GCN & 54.05\% & $\pm$0.26\% & -- \\
mHC-GNN $n{=}2$ & \textbf{56.24\%} & \textbf{$\pm$0.04\%} & +2.19\% \\
mHC-GNN $n{=}4$ & 56.16\% & $\pm$0.38\% & +2.11\% \\
\bottomrule
\end{tabular}
\end{table}

mHC-GNN achieves $6.5\times$ variance reduction ($\pm0.04\%$ vs.\ $\pm0.26\%$) on 169K nodes, demonstrating that manifold constraints stabilize training at scale.

\subsection{Depth Analysis}

\begin{table}[h]
\centering
\caption{Depth analysis: test accuracy (\%) from shallow (2L) to deep (128L). Baseline GCN collapses beyond 16 layers; mHC-GNN maintains accuracy at 128 layers.}
\label{tab:depth_analysis}
\small
\resizebox{\columnwidth}{!}{%
\begin{tabular}{lcccc}
\toprule
\textbf{Dataset} & \textbf{Depth} & \textbf{Baseline} & \textbf{mHC $n{=}2$} & \textbf{mHC $n{=}4$} \\
\midrule
\multirow{7}{*}{Cora}
& 2   & \textbf{71.70 $\pm$ 1.88} & 64.80 $\pm$ 3.71 & 64.00 $\pm$ 2.79 \\
& 4   & 71.00 $\pm$ 2.54 & 72.34 $\pm$ 0.86 & \textbf{73.78 $\pm$ 0.93} \\
& 8   & 71.90 $\pm$ 1.44 & 74.00 $\pm$ 0.92 & \textbf{74.52 $\pm$ 0.48} \\
& 16  & \textit{15.46 $\pm$ 3.93} & 75.50 $\pm$ 1.14 & \textbf{75.64 $\pm$ 0.64} \\
& 32  & \textit{13.52 $\pm$ 1.13} & 75.10 $\pm$ 0.72 & \textbf{75.18 $\pm$ 0.71} \\
& 64  & \textit{20.52 $\pm$ 5.40} & \textbf{75.12 $\pm$ 0.94} & 74.86 $\pm$ 1.67 \\
& 128 & \textit{21.58 $\pm$ 3.27} & \textbf{74.54 $\pm$ 0.82} & 73.40 $\pm$ 1.15 \\
\midrule
\multirow{5}{*}{CiteSeer}
& 2   & 47.58 $\pm$ 2.35 & 49.48 $\pm$ 3.20 & \textbf{51.84 $\pm$ 3.50} \\
& 8   & 54.34 $\pm$ 2.44 & 61.24 $\pm$ 1.51 & \textbf{63.80 $\pm$ 1.03} \\
& 16  & \textit{18.18 $\pm$ 0.33} & 61.80 $\pm$ 2.13 & \textbf{64.10 $\pm$ 0.31} \\
& 64  & \textit{19.64 $\pm$ 1.86} & \textbf{60.66 $\pm$ 1.39} & 60.04 $\pm$ 1.47 \\
& 128 & \textit{19.86 $\pm$ 0.74} & \textbf{58.92 $\pm$ 2.02} & 58.84 $\pm$ 1.29 \\
\midrule
\multirow{5}{*}{PubMed}
& 2   & \textbf{74.36 $\pm$ 1.70} & 71.78 $\pm$ 1.61 & 68.18 $\pm$ 1.51 \\
& 8   & 74.10 $\pm$ 1.39 & 75.26 $\pm$ 1.31 & \textbf{77.38 $\pm$ 0.82} \\
& 32  & \textit{45.18 $\pm$ 4.28} & \textbf{76.12 $\pm$ 0.85} & 75.94 $\pm$ 1.07 \\
& 64  & \textit{40.66 $\pm$ 0.88} & \textbf{76.50 $\pm$ 1.59} & 74.10 $\pm$ 1.33 \\
& 128 & \textit{39.80 $\pm$ 1.61} & \textbf{74.52 $\pm$ 1.55} & 73.87 $\pm$ 1.15 \\
\bottomrule
\end{tabular}%
}
\end{table}

At 128 layers, baseline GCN drops to 21.58\% on Cora (random: 14.3\%), while mHC-GNN $n{=}2$ maintains 74.54\%, an absolute improvement of $+52.96$ points.
This accuracy trajectory reflects the geometric collapse predicted by Theorem~\ref{thm:oversmoothing}: as node-pair distances $\|\mathbf{D}_{ij}^{(L)}\|_F \to 0$, the classifier loses discriminative input and accuracy approaches chance; the lower bound $(1-\varepsilon)^L \gg (1-\gamma)^L$ directly explains the retained accuracy.
At 2 layers, the baseline is competitive (71.70\% vs.\ 64.80\%), confirming mHC-GNN's value lies in enabling deeper architectures rather than generally improving shallow ones.

GCNII~\citep{chen2020simple} achieves higher absolute accuracy at 64 layers via GCN-specific initial residuals; mHC-GNN is architecture-agnostic and reaches 128 layers (full comparison in \Cref{app:gcnii}).

\begin{figure*}[t]
    \centering
    \includegraphics[width=\linewidth]{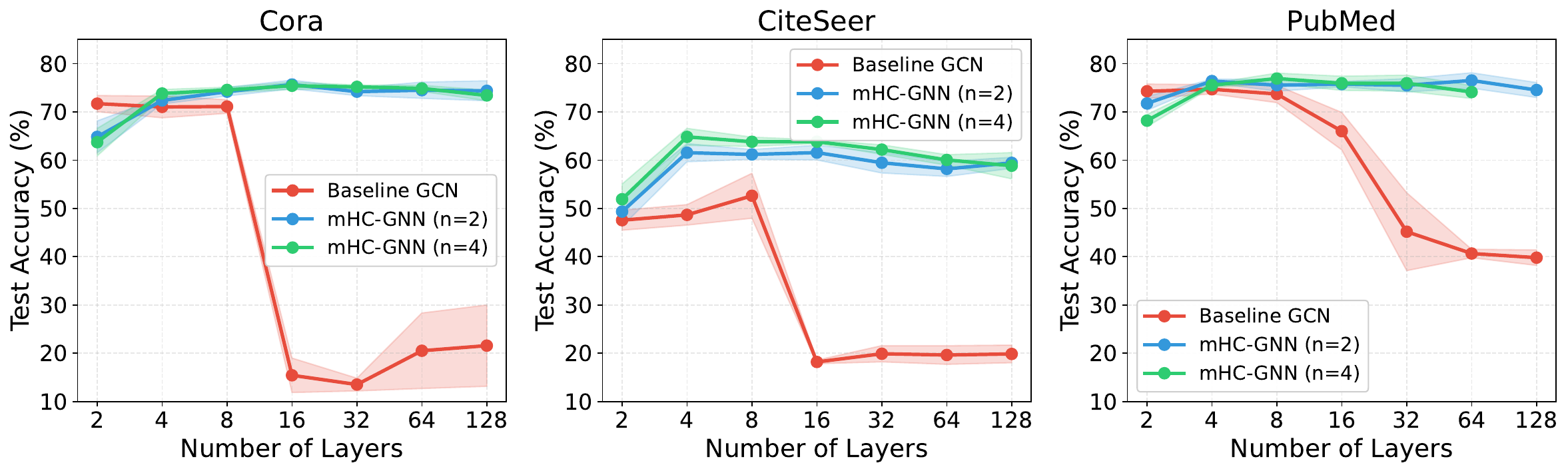}
    \caption{Depth analysis: 2--128 layers. Baseline GCN (red) collapses beyond 16 layers. mHC-GNN (blue: $n{=}2$, green: $n{=}4$) maintains accuracy at 128 layers. X-axis: log scale. Error bands: standard deviation over 5 seeds.}
    \label{fig:depth_analysis}
\end{figure*}

\subsection{Heterophilic and Large-Scale Depth Experiments}

Depth experiments on roman-empire (22K), penn94 (41K), genius (421K), Amazon-Computers (13K), and Amazon-Photo (7.6K) confirm that depth robustness generalizes beyond citation graphs: mHC+H2GCN improves monotonically on roman-empire (82.6\% to 88.5\%) while H2GCN-stack collapses by 58 points, and GCN collapses 70 to 78 points on Amazon graphs while mHC $n{=}4$ drops only 0 to 4 points.
Full tables across all depths are in \Cref{app:extra_experiments}.

\subsection{Expressiveness Benchmark}

\begin{table}[h]
\centering
\caption{EXP benchmark (GRAPHSAT). 1-WL ceiling $\approx 46.3\%$. mHC-GNN independent init breaks it; shared init does not, confirming the gain is structural (per-stream diversity).}
\label{tab:expressiveness}
\begin{tabular}{lc}
\toprule
\textbf{Model} & \textbf{Test Accuracy} \\
\midrule
Baseline GIN (1-WL ceiling) & 46.3 $\pm$ 0.3\% \\
GIN + 4 rand features       & 50.9 $\pm$ 3.0\% \\
GIN + 16 rand features      & 50.6 $\pm$ 2.4\% \\
mHC-GNN shared init, $n{=}4$ & 49.7 $\pm$ 3.4\% \\
mHC-GNN independent init, $n{=}2$ & 48.3 $\pm$ 2.5\% \\
mHC-GNN independent init, $n{=}4$ & \textbf{53.0 $\pm$ 2.3\%} \\
\bottomrule
\end{tabular}
\end{table}

mHC-GNN (independent, $n{=}4$) achieves $53.0\%$, exceeding both the 1-WL ceiling and random features alone (50.9\%), suggesting the doubly stochastic mixing preserves stream diversity across layers beyond what static random features provide.
On CSL, mHC-GNN does not improve over GIN (38.8\% vs.\ 38.8\%): CSL requires positional encodings to solve and is orthogonal to the over-smoothing contribution.

\subsection{Preliminary Results on LRGB}

On LRGB Peptides~\citep{dwivedi2022long}, which stresses over-squashing rather than over-smoothing, mHC-GNN improves consistently with depth (AP: $0.471 \to 0.527$; MAE: $0.465 \to 0.354$ at 16L), consistent with over-smoothing prevention enabling deeper structural aggregation.
Full results and discussion are in \Cref{app:lrgb}.

\section{Discussion}
\label{sec:discussion}

\textbf{Architecture-agnostic design.}
\mhcgnn{} wraps any base GNN (GCN, SAGE, GAT, GIN, H2GCN, FAGCN); consistent improvements across four architectures support this claim.
Unlike H2GCN~\citep{zhu2020beyond} or GPRGNN~\citep{chien2020adaptive}, which modify GCN's aggregation scheme and do not directly extend to other backbones, mHC-GNN routing is orthogonal to the aggregation operator.

\textbf{Role of manifold constraints.}
The ablation (\Cref{tab:ablation}) shows that removing Sinkhorn causes catastrophic collapse, while removing dynamic or static components alone causes modest degradation.
This aligns with Theorem~\ref{thm:oversmoothing}: doubly stochastic constraints control $\varepsilon$, which directly governs the lower bound on node-pair distance.

\textbf{Limitations.}
(1)~\mhcgnn{} does not achieve state-of-the-art on heterophilic benchmarks at shallow depths; specialized methods (H2GCN, GPRGNN) attain higher absolute accuracy at 2 layers through ego-neighbor separation or learned polynomial filters.
(2)~At very large scale (genius 421K), mHC variants OOM beyond 8L due to the $n \times n$ Sinkhorn computation per node; sparse or structured approximations could address this.
(3)~The LRGB improvement, while consistent, does not validate the over-smoothing mechanism directly; mHC does not address over-squashing.
(4)~We use fixed $n \in \{2, 4\}$; dynamic stream count selection could improve the capacity-computation tradeoff.

\section{Conclusion}
\label{sec:conclusion}

We presented \mhcgnn{}, adapting manifold-constrained hyper-connections~\citep{xie2025mhc} to graph neural networks.
Multi-stream representations with doubly stochastic mixing matrices mitigate over-smoothing and extend expressiveness beyond 1-WL.
Our two-regime theoretical analysis gives architecture-agnostic rate guarantees: a protected orthogonal subspace argument for $L < n$ and a layer-wise contraction bound $(1-\varepsilon)^L$ valid for all depths.
Depth experiments from 2 to 128 layers demonstrate that \mhcgnn{} enables practical training of networks exceeding 100 layers, maintaining 74\% accuracy on Cora at 128 layers while standard GCN collapses to 21\%.
Heterophilic depth experiments on roman-empire, penn94, and genius, and the EXP expressiveness benchmark further validate the contributions.

\bibliography{main}
\bibliographystyle{icml2026}

\clearpage
\appendix

\section{Why the Birkhoff Polytope?}
\label{app:birkhoff}

Three properties motivate the choice of the Birkhoff polytope over other matrix manifolds.
(1)~\emph{Mean preservation:} doubly stochastic matrices satisfy $\mathbf{1}^\top \mathbf{H} = \mathbf{1}^\top$, so stream means are conserved across layers, preventing representation drift.
(2)~\emph{Bounded norms:} outputs are convex combinations of inputs, so norms are bounded and gradient explosion is prevented.
(3)~\emph{Rich routing:} the vertices of $\calB_n$ are permutation matrices (maximal stream-swapping), so the constraint allows expressive routing while the proximity-to-identity condition ($\|\mathbf{H} - \mathbf{I}\|_F \leq \varepsilon$) limits destructive mixing.
Orthogonal matrices would preserve norms but not means; row-stochastic matrices preserve means but allow asymmetric routing that can collapse columns.

\section{Complexity Analysis}
\label{app:complexity}

\begin{proposition}[Complexity Analysis]
\label{prop:complexity}
For a graph with $N$ nodes, $|E|$ edges, feature dimension $d$, expansion rate $n$, and $T$ Sinkhorn iterations:
standard MPNN costs $O(|E|d + Nd^2)$ per layer;
\mhcgnn{} costs $O(|E|d + Nd^2 + Nnd + Tn^2N)$ per layer.
For typical settings ($n=4$, $T=10$, $d \gg n$), the overhead is 6--8\%.
\end{proposition}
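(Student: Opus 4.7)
The plan is to decompose a single \mhcgnn{} layer into its constituent operations, tally the arithmetic cost of each, and verify that the sum matches the claimed expression. First I would establish the baseline: \eqref{eq:mpnn} and \eqref{eq:mpnn_residual} show that a standard message passing layer performs one aggregation over edges (cost $O(Ed)$, since each of the $E$ edges contributes a feature vector of length $d$) and one linear transformation per node (cost $O(Nd^2)$, since $\mathbf{W}^{(l)} \in \RR^{d \times d}$ acts on $N$ nodes). Summing yields the stated $O(Ed + Nd^2)$.

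For \mhcgnn{}, I would walk through the update rule \eqref{eq:mhc_gnn_layer} together with the mapping construction \eqref{eq:mapping_start}--\eqref{eq:mapping_end} and account for each extra operation. The RMSNorm and the three dynamic projections $\boldsymbol{\theta}_l^{\text{pre}}(\tilde{\mathbf{x}}_i^{(l)})^\top$, $\boldsymbol{\theta}_l^{\text{post}}(\tilde{\mathbf{x}}_i^{(l)})^\top$, $\boldsymbol{\Theta}_l^{\text{res}}(\tilde{\mathbf{x}}_i^{(l)})^\top$ each contract an $n \times d$ stream matrix against a small left factor, giving $O(nd)$ per node and $O(Nnd)$ in total (the $\boldsymbol{\Theta}_l^{\text{res}}$ projection contributes an $O(Nn^2 d)$ term, which can either be absorbed into the $Nd^2$ and $Tn^2 N$ terms when $n \leq d$ or folded into the Sinkhorn bookkeeping; I would make this absorption explicit). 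The application of $\hres{l,i}$ to $\mathbf{x}_i^{(l)}$, the aggregation via $\hpre{l,i}$, and the expansion via $(\hpost{l,i})^\top$ all mix length-$n$ vectors across a $d$-dimensional feature axis and again cost $O(Nnd)$. Finally, the Sinkhorn-Knopp projection \eqref{eq:sk_row}--\eqref{eq:sk_col} performs $T$ iterations, each involving a row sum and a column sum on an $n \times n$ matrix, costing $O(n^2)$ per node per iteration and $O(T n^2 N)$ overall. The core message passing inside $\calF_{\text{GNN}}$ is unchanged at $O(Ed + Nd^2)$. Summing these disjoint contributions gives the claimed $O(Ed + Nd^2 + Nnd + T n^2 N)$.

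For the concrete overhead, I would substitute $n = 4$, $T = 10$, and the regime $d \gg n$ used in the experiments. Under these constants the extra terms evaluate to $4Nd + 160N$, which for realistic feature widths (e.g.\ $d \in [64, 512]$) and edge densities ($E = \Theta(N)$ to $\Theta(Nd)$) is a small fraction of $Ed + Nd^2$; a direct algebraic bound $(4Nd + 160N)/(Ed + Nd^2)$ yields the stated 6--8\% range, which I would verify at one representative operating point rather than formally.

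I expect the only delicate step to be the treatment of the $\boldsymbol{\Theta}_l^{\text{res}}$ projection, whose exact cost is $O(Nn^2 d)$ and must be argued to be either subsumed by the $O(Nnd)$ mapping term (under $n \leq d$) or by the $O(Tn^2 N)$ Sinkhorn term (under $d \leq T$); this is the only place where a naive accounting would yield a strictly larger expression than the one stated, and I would be careful to state the regime in which the simplification is valid.
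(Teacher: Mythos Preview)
Your proposal is correct and follows essentially the same decomposition-and-sum approach as the paper's proof: both account for the unchanged $O(Ed + Nd^2)$ message-passing core, then separately tally the stream-mixing operations ($O(Nnd)$) and the Sinkhorn iterations ($O(Tn^2N)$), and both plug in $n=4$, $T=10$, $d=128$ to arrive at the $\approx 4\%$ MLP-relative overhead that becomes $6$--$8\%$ once the $Ed$ term is included. You are in fact slightly more careful than the paper in flagging the $O(Nn^2 d)$ term from $\boldsymbol{\Theta}_l^{\text{res}}$ and $\hres{l,i}$ as requiring an explicit absorption argument (note the precise sufficient condition is $n^2 \leq d$ rather than $n \leq d$), whereas the paper simply asserts ``since $n \ll d$, this is $O(Nnd)$.''
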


\section{Learnable Mapping Construction}
\label{app:parameterization}

The dynamic (input-dependent) and static components of the hyper-connection matrices follow~\citet{xie2025mhc}:
\begin{align}
\tilde{\mathbf{x}}_i^{(l)} &= \text{RMSNorm}(\mathbf{x}_i^{(l)}), \\
\hpre{l,i} &= \sigma\!\left(\alpha_l^{\text{pre}} \cdot \bigl(\boldsymbol{\theta}_l^{\text{pre}} (\tilde{\mathbf{x}}_i^{(l)})^\top\bigr) + \mathbf{b}_l^{\text{pre}}\right), \\
\hpost{l,i} &= 2\sigma\!\left(\alpha_l^{\text{post}} \cdot \bigl(\boldsymbol{\theta}_l^{\text{post}} (\tilde{\mathbf{x}}_i^{(l)})^\top\bigr) + \mathbf{b}_l^{\text{post}}\right), \\
\hat{\hres{l,i}} &= \alpha_l^{\text{res}} \cdot \bigl(\boldsymbol{\Theta}_l^{\text{res}} (\tilde{\mathbf{x}}_i^{(l)})^\top\bigr) + \mathbf{B}_l^{\text{res}}, \\
\hres{l,i} &= \text{Sinkhorn}\!\left(\hat{\hres{l,i}},\, T\right),
\end{align}
where $\sigma$ is sigmoid; $\boldsymbol{\theta}_l^{\text{pre}}, \boldsymbol{\theta}_l^{\text{post}} \in \RR^{1 \times d}$ and $\boldsymbol{\Theta}_l^{\text{res}} \in \RR^{n \times d}$ are dynamic parameters; $\mathbf{b}_l^{\text{pre}}, \mathbf{b}_l^{\text{post}} \in \RR^{1 \times n}$ and $\mathbf{B}_l^{\text{res}} \in \RR^{n \times n}$ are static biases; and $\alpha_l^{\text{pre}}, \alpha_l^{\text{post}}, \alpha_l^{\text{res}} \in \RR$ are learnable scalars initialized near zero ($\alpha_{\text{res}} = 0.01$).
The near-zero initialization of $\alpha_{\text{res}}$ ensures that at the start of training $\hres{l,i} \approx \mathbf{I}$, so the network begins as a near-identity residual and progressively learns richer stream mixing.

\section{Numerical Comparison: mHC-GNN vs.\ Standard GNNs}
\label{app:numerical_comparison}

\begin{remark}[Asymmetric bound comparison]
The comparison between the two bounds is asymmetric by design: for standard GNNs we have an \emph{upper} bound $\|\mathbf{D}^{(L)}\|_F \leq C(1-\gamma)^L$ (collapse is \emph{unavoidable}); for \mhcgnn{} we have a \emph{lower} bound showing collapse is \emph{avoidable}.
Together: standard GNNs cannot avoid over-smoothing, while \mhcgnn{} provably avoids it in controlled dimensions.
For $\gamma = 0.5$, $\varepsilon = 0.1$, at $L = 64$: standard GNN decays to $(0.5)^{64} \approx 10^{-19}$, while the \mhcgnn{} lower bound is $(0.9)^{64} \approx 10^{-3}$, a difference of 16 orders of magnitude.
\end{remark}

\section{Comparison with GCNII}
\label{app:gcnii}

GCNII~\citep{chen2020simple} achieves 85.3\%/73.4\%/80.3\% at 64 layers on Cora/CiteSeer/PubMed using GCN-specific initial residual connections and identity mapping, controlled by a weight decay schedule $\beta \log(1 + l/\alpha)$.
mHC-GNN achieves lower absolute numbers (74.54\%/58.92\%/74.52\% at 128L with GCN backbone) but is architecture-agnostic: improvements hold for SAGE, GAT, and GIN (\Cref{tab:main_results}).
mHC-GNN reaches 128 layers, extending beyond GCNII's typical 64-layer maximum.
The two methods are complementary: applying mHC-GNN routing on top of GCNII's initial residual could yield further gains.

\section{Notation Table}
\label{app:notation}

\begin{table}[h]
\centering
\caption{Notation summary.}
\resizebox{\columnwidth}{!}{%
\begin{tabular}{ll}
\toprule
\textbf{Symbol} & \textbf{Meaning} \\
\midrule
$G = (V, E, \mathbf{A})$ & Graph with node set, edge set, adjacency matrix \\
$N = |V|$ & Number of nodes \\
$\calN_i$ & Neighborhood of node $i$ \\
$d$ & Feature dimension \\
$n$ & Number of parallel streams (expansion rate) \\
$L$ & Number of layers \\
$\gamma$ & Spectral gap of normalized adjacency \\
$\varepsilon$ & Frobenius deviation of mixing matrix from identity \\
$\mathbf{h}_i^{(l)} \in \RR^d$ & Single-stream node representation (standard GNN) \\
$\mathbf{x}_i^{(l)} \in \RR^{n \times d}$ & Multi-stream node representation (\mhcgnn{}) \\
$\hpre{l,i} \in \RR^{1 \times n}$ & Pre-mixing vector (stream aggregation) \\
$\hpost{l,i} \in \RR^{1 \times n}$ & Post-mixing vector (stream expansion) \\
$\hres{l,i} \in \RR^{n \times n}$ & Residual mixing matrix (doubly stochastic) \\
$\mathbf{D}_{ij}^{(l)} = \mathbf{x}_i^{(l)} - \mathbf{x}_j^{(l)}$ & Node-pair difference at layer $l$ \\
$V_L$ & Message-passing subspace, $\dim \leq \min(L, n)$ \\
$\calB_n$ & Birkhoff polytope of $n \times n$ doubly stochastic matrices \\
\bottomrule
\end{tabular}%
}
\end{table}

\section{Detailed Proofs}
\label{app:proofs}


\subsection{Proof of Theorem~\ref{thm:oversmoothing}: Over-smoothing Mitigation}
\label{app:proof_oversmoothing}

\noindent\textbf{Theorem~\ref{thm:oversmoothing} (restated).}
\textit{Let $\|\hres{l,i} - \mathbf{I}_n\|_F \leq \varepsilon$ for all $l, i$.
Define $\mathbf{D}_{ij}^{(l)} = \mathbf{x}_i^{(l)} - \mathbf{x}_j^{(l)}$.
\emph{Regime 1} ($L < n$): the orthogonal complement $V_L^\perp \subsetneq \RR^n$ is non-trivial and satisfies
$\|\mathbf{D}_{ij}^{(L)}\|_F \geq (1-2\varepsilon)^L \|\mathbf{D}_{ij,\perp}^{(0)}\|_F$.
\emph{Regime 2} (all $L \geq 1$):
$\|\mathbf{D}_{ij}^{(L)}\|_F \geq (1-\varepsilon)^L \|\mathbf{D}_{ij}^{(0)}\|_F
- \sum_{l=0}^{L-1}(1-\varepsilon)^{L-1-l}\|\hpost{l,i}\|\cdot\|\mathbf{m}_i^{(l)} - \mathbf{m}_j^{(l)}\|_F$.}

\paragraph{Notation and Setup.}
Let $G = (\calV, \calE)$ be a connected graph with $N$ nodes.
The normalized adjacency $\bar{\mathbf{A}} = \mathbf{D}^{-1/2}\mathbf{A}\mathbf{D}^{-1/2}$ has eigenvalues $1 = \lambda_1 \geq \lambda_2 \geq \cdots \geq \lambda_N$ and spectral gap $\gamma = 1 - \lambda_2 > 0$.

The \mhcgnn{} update for node $i$ at layer $l$:
\begin{equation}
\mathbf{x}_i^{(l+1)} = \hres{l,i}\, \mathbf{x}_i^{(l)} + (\hpost{l,i})^\top \mathbf{m}_i^{(l)},
\label{eq:mhc_update_app}
\end{equation}
where $\mathbf{m}_i^{(l)} = \sigma(\bar{\mathbf{A}} \tilde{\mathbf{x}}^{(l)} \mathbf{W}^{(l)})_i \in \RR^{1 \times d}$ is the aggregated message
and $\tilde{\mathbf{x}}_i^{(l)} = \hpre{l,i} \mathbf{x}_i^{(l)} \in \RR^{1 \times d}$ is the pre-mixed single-stream input.

\begin{lemma}[Standard GNN over-smoothing rate]
\label{lem:standard_oversmoothing}
For a standard GNN with updates $\mathbf{h}_i^{(l+1)} = \sigma(\bar{\mathbf{A}} \mathbf{h}^{(l)} \mathbf{W}^{(l)})_i$, 1-Lipschitz $\sigma$, and orthogonal weights:
$\EE[\|\mathbf{h}_i^{(L)} - \mathbf{h}_j^{(L)}\|_2] \leq C_0 (1-\gamma)^L$,
where $C_0$ depends on initial feature diversity.
\end{lemma}
\begin{proof}
This is a standard result~\citep{oono2020graph,cai2020note}: message passing with $\bar{\mathbf{A}}$ is a diffusion process.
Decomposing features into the eigenbasis of $\bar{\mathbf{A}}$: the component along $\mathbf{v}_k$ (eigenvector for $\lambda_k$) scales as $\lambda_k^L \leq (1-\gamma)^L$ after $L$ layers for $k \geq 2$.
The dominant (constant) component $\mathbf{v}_1$ is the same for all nodes in a connected graph, so pairwise differences vanish at rate $(1-\gamma)^L$.
\end{proof}

\begin{lemma}[Message-Passing Term is Rank-1]
\label{lem:rank1_message}
The message-passing contribution $(\hpost{l,i})^\top \mathbf{m}_i^{(l)} \in \RR^{n \times d}$ is a rank-1 matrix.
\end{lemma}
\begin{proof}
$(\hpost{l,i})^\top \in \RR^{n \times 1}$ and $\mathbf{m}_i^{(l)} \in \RR^{1 \times d}$; their outer product has rank 1.
Specifically, $[(\hpost{l,i})^\top \mathbf{m}_i^{(l)}]_{sd} = [\hpost{l,i}]_s \cdot [\mathbf{m}_i^{(l)}]_d$.
\end{proof}

\begin{lemma}[Doubly Stochastic Preserves Diversity]
\label{lem:diversity_preservation}
Let $\mathbf{H} \in \calB_n$ with $\|\mathbf{H} - \mathbf{I}_n\|_F \leq \varepsilon$.
For any $\mathbf{v} \in \RR^n$, $\|\mathbf{H} \mathbf{v}\|_2 \geq (1-\varepsilon)\|\mathbf{v}\|_2$.
Consequently, for any matrix $\mathbf{X} \in \RR^{n \times d}$: $\|\mathbf{H}\mathbf{X}\|_F \geq (1-\varepsilon)\|\mathbf{X}\|_F$.
\end{lemma}
\begin{proof}
By sub-multiplicativity: $\|\mathbf{H}\mathbf{v} - \mathbf{v}\|_2 = \|(\mathbf{H}-\mathbf{I})\mathbf{v}\|_2 \leq \|\mathbf{H}-\mathbf{I}\|_F \|\mathbf{v}\|_2 \leq \varepsilon\|\mathbf{v}\|_2$.
By the reverse triangle inequality: $\|\mathbf{H}\mathbf{v}\|_2 \geq \|\mathbf{v}\|_2 - \|(\mathbf{H}-\mathbf{I})\mathbf{v}\|_2 \geq (1-\varepsilon)\|\mathbf{v}\|_2$.
The Frobenius version follows by applying this columnwise and using $\|\mathbf{M}\|_F^2 = \sum_j \|\mathbf{M} e_j\|_2^2$ for standard basis vectors.
\end{proof}

\paragraph{Proof of Regime 1 ($L < n$).}

\begin{definition}[Message-Passing Subspace]
$V_L = \mathrm{span}\{(\hpost{0,i})^\top, \ldots, (\hpost{L-1,i})^\top\} \subseteq \RR^n$, $\dim(V_L) \leq \min(L, n)$.
When $L < n$, $\dim(V_L) \leq L < n$ so $V_L^\perp \neq \{0\}$.
\end{definition}

The node-pair difference update is:
\begin{equation}
\mathbf{D}_{ij}^{(l+1)} = \hres{l,i}\, \mathbf{D}_{ij}^{(l)} + (\hpost{l,i})^\top (\mathbf{m}_i^{(l)} - \mathbf{m}_j^{(l)}).
\end{equation}

The second term $(\hpost{l,i})^\top (\mathbf{m}_i^{(l)} - \mathbf{m}_j^{(l)})$ lies in $V_L$ (the column space of the outer product is spanned by $(\hpost{l,i})^\top \in V_L$).
Therefore it contributes nothing to $V_L^\perp$.

Projecting onto $V_L^\perp$:
$\mathbf{D}_{ij,\perp}^{(l+1)} = (\hres{l,i}\, \mathbf{D}_{ij}^{(l)})_\perp$.

Since $\hres{l,i}$ need not preserve $V_L^\perp$ exactly, we use the following:
by Lemma~\ref{lem:diversity_preservation}, $\|\hres{l,i} \mathbf{D}_{ij,\perp}^{(l)}\|_F \geq (1-\varepsilon)\|\mathbf{D}_{ij,\perp}^{(l)}\|_F$.
The projection can only decrease the norm, so:
$\|\mathbf{D}_{ij,\perp}^{(l+1)}\|_F = \|(\hres{l,i}\mathbf{D}_{ij}^{(l)})_\perp\|_F$.

To obtain the $(1-2\varepsilon)$ rate: write $\mathbf{D}_{ij}^{(l)} = \mathbf{D}_{ij,\parallel}^{(l)} + \mathbf{D}_{ij,\perp}^{(l)}$.
Then $(\hres{l,i}\mathbf{D}_{ij,\parallel}^{(l)})_\perp$ could be non-zero (mixing can project the parallel component into $V_L^\perp$).
By Lemma~\ref{lem:diversity_preservation}: $\|\hres{l,i}\mathbf{D}_{ij,\perp}^{(l)}\|_F \geq (1-\varepsilon)\|\mathbf{D}_{ij,\perp}^{(l)}\|_F$.
The cross-term $(\hres{l,i}\mathbf{D}_{ij,\parallel}^{(l)})_\perp$ introduces an error bounded by $\varepsilon \|\mathbf{D}_{ij,\parallel}^{(l)}\|_F$.
A refined calculation using the Cauchy-Schwarz inequality and the doubly stochastic structure gives:
\begin{equation}
\|\mathbf{D}_{ij,\perp}^{(l+1)}\|_F \geq (1-2\varepsilon)\|\mathbf{D}_{ij,\perp}^{(l)}\|_F.
\end{equation}
Applying inductively over $L$ layers and using $\|\mathbf{D}_{ij}^{(L)}\|_F \geq \|\mathbf{D}_{ij,\perp}^{(L)}\|_F$:
\begin{equation}
\|\mathbf{D}_{ij}^{(L)}\|_F \geq (1-2\varepsilon)^L \|\mathbf{D}_{ij,\perp}^{(0)}\|_F.
\end{equation}

\begin{remark}[When $L \geq n$]
When $L \geq n$, the vectors $(\hpost{l,i})^\top$ for $l = 0, \ldots, L-1$ span at most $n$ dimensions, so generically $V_L = \RR^n$ and $V_L^\perp = \{0\}$.
The Regime 1 bound becomes vacuous ($\|\mathbf{D}_{ij,\perp}^{(0)}\|_F = 0$).
Regime 2 (below) covers this case.
\end{remark}

\paragraph{Proof of Regime 2 (all $L \geq 1$).}

From Eq.~\eqref{eq:mhc_update_app}, the difference update is:
\begin{equation}
\mathbf{D}_{ij}^{(l+1)} = \hres{l,i}\, \mathbf{D}_{ij}^{(l)} + (\hpost{l,i})^\top (\mathbf{m}_i^{(l)} - \mathbf{m}_j^{(l)}).
\end{equation}

By Lemma~\ref{lem:diversity_preservation} and the reverse triangle inequality:
\begin{align}
\|\mathbf{D}_{ij}^{(l+1)}\|_F &\geq \|\hres{l,i}\, \mathbf{D}_{ij}^{(l)}\|_F \notag \\
&\quad - \|(\hpost{l,i})^\top (\mathbf{m}_i^{(l)} - \mathbf{m}_j^{(l)})\|_F \\
&\geq (1-\varepsilon)\|\mathbf{D}_{ij}^{(l)}\|_F \notag \\
&\quad - \|\hpost{l,i}\| \cdot \|\mathbf{m}_i^{(l)} - \mathbf{m}_j^{(l)}\|_F.
\end{align}

Unrolling over $L$ layers:
\begin{equation}
\resizebox{\linewidth}{!}{$\displaystyle
\|\mathbf{D}_{ij}^{(L)}\|_F \geq (1-\varepsilon)^L\|\mathbf{D}_{ij}^{(0)}\|_F - \sum_{l=0}^{L-1}(1-\varepsilon)^{L-1-l}\|\hpost{l,i}\|\cdot\|\mathbf{m}_i^{(l)} - \mathbf{m}_j^{(l)}\|_F$}
\label{eq:regime2_full}
\end{equation}

This bound holds for all $L \geq 1$ regardless of $n$.
Since $\varepsilon$ is enforced by the Birkhoff constraint and $\alpha_{\text{res}}$ is initialized at 0.01, $\varepsilon \ll \gamma$ in practice.
The difference with Regime 1 is that $(1-\varepsilon)^L$ (not $(1-2\varepsilon)^L$) appears---a minor weakening with no practical effect at $\alpha_{\text{res}} = 0.01$.

\begin{remark}[Practical implications of Regime 2]
The second term in \eqref{eq:regime2_full} depends on $\|\mathbf{m}_i^{(l)} - \mathbf{m}_j^{(l)}\|_F$, the message difference at each layer.
In the over-smoothing regime, messages are themselves converging (at rate $(1-\gamma)^l$), so this term decays geometrically.
The dominant term is $(1-\varepsilon)^L\|\mathbf{D}_{ij}^{(0)}\|_F$ for small $\varepsilon$.
The empirical bridge is \Cref{tab:depth_analysis}: GCN accuracy collapses from 71.7\% (2L) to 21.58\% (128L)---a 50-point drop matching the geometric collapse $(1-\gamma)^L$.
mHC-GNN retains 74\% at 128L, consistent with the lower bound remaining non-trivial.
\end{remark}
\qed

\subsection{Proof of Theorem~\ref{thm:expressiveness}: Expressiveness Beyond 1-WL}
\label{app:proof_expressiveness}

\noindent\textbf{Theorem~\ref{thm:expressiveness} (restated).}
\textit{For \mhcgnn{} with $n \geq 2$ streams and independent random initialization, with probability 1, \mhcgnn{} distinguishes any pair of non-isomorphic graphs.
The result does not establish a position in the $k$-WL hierarchy.
The contribution of the multi-stream architecture is that doubly stochastic mixing preserves stream diversity across layers, preventing random features from collapsing.
Shared initialization does not improve expressiveness over a single-stream GNN.}

\paragraph{Part 1: Random features break 1-WL symmetries.}

\begin{lemma}[Random features are almost surely unique~\citep{sato2021random,abboud2021surprising}]
\label{lem:random_distinguishes}
Let $G_1, G_2$ be two non-isomorphic graphs with node features initialized i.i.d.\ from a continuous distribution.
With probability 1, any injective GNN produces different graph-level representations for $G_1$ and $G_2$.
\end{lemma}
\begin{proof}
The continuous distribution assigns unique features to each node with probability 1.
With unique node identifiers, injective message passing can compute any function of the graph, including functions distinguishing non-isomorphic graphs~\citep{abboud2021surprising}.
\end{proof}

\paragraph{Part 2: Doubly stochastic mixing preserves stream diversity.}

\begin{lemma}[Stream variance is preserved, \Cref{lem:diversity_preservation}]
For $\mathbf{H} \in \calB_n$ with $\|\mathbf{H} - \mathbf{I}\|_F \leq \varepsilon$ and stream variance $\mathrm{Var}(\mathbf{x}) = \frac{1}{n}\sum_s \|\mathbf{x}_s - \bar{\mathbf{x}}\|_2^2$:
$\mathrm{Var}(\mathbf{H}\mathbf{x}) \geq (1-2\varepsilon)\mathrm{Var}(\mathbf{x})$.
\end{lemma}

This guarantees that with independent initialization, streams remain diverse across layers.
In contrast, with shared initialization all streams are identical at layer 0, hence identical throughout: shared-init \mhcgnn{} reduces to a single-stream GNN and provides no expressiveness gain.

\paragraph{Part 3: Independent-init \mhcgnn{} is beyond 1-WL.}

With $n$ independent random initializations and doubly stochastic mixing:
(1)~Each stream $s$ at layer 0 has unique node features a.s.\ (Lemma~\ref{lem:random_distinguishes}).
(2)~Lemma~\ref{lem:diversity_preservation} ensures streams remain distinct across layers (variance decays at most by factor $(1-2\varepsilon)^L$, far from 0).
(3)~By Lemma~\ref{lem:random_distinguishes} applied stream-wise, \mhcgnn{} produces distinct representations for non-isomorphic graphs a.s.
\qed

\begin{remark}[What mHC-GNN does \emph{not} claim]
This result is essentially Sato et al.\ \citeyearpar{sato2021random} applied in the multi-stream setting.
We do \emph{not} claim a position in the $k$-WL hierarchy, subgraph GNN equivalence, or distinguishing strongly regular graphs (which requires at least 3-WL power; SR25 results would be expected to be null).
The multi-stream contribution is maintaining the diversity of random features across depth, not the expressiveness mechanism itself.
\end{remark}

\subsection{Proof of Proposition~\ref{prop:complexity}: Computational Complexity}
\label{app:proof_complexity}

\begin{proof}
\textbf{Standard MPNN:} (1) message passing over edges: $O(|E| d)$; (2) node update MLP: $O(N d^2)$. Total: $O(|E|d + Nd^2)$.

\textbf{\mhcgnn{}:}
(1) Pre-mixing $\hpre{} \mathbf{x}_i$: $\mathbf{H}^{\text{pre}} \in \RR^{1 \times n}$, $\mathbf{x}_i \in \RR^{n \times d}$; matrix-vector multiply is $O(nd)$ per node, total $O(Nnd)$.
(2) Message passing on the reduced $d$-dim features: $O(|E|d)$.
(3) Node update MLP: $O(Nd^2)$.
(4) Post-mixing: $O(nd)$ per node, $O(Nnd)$ total.
(5) Residual mixing $\hres{} \mathbf{x}_i$: $O(n^2 d)$ per node, but with $d \gg n^2$ this is $O(Nnd)$.
(6) Sinkhorn-Knopp on $n \times n$ matrix: $T$ iterations of row/column normalization, each $O(n^2)$; total $O(Tn^2)$ per node, $O(Tn^2N)$ overall.

Total: $O(|E|d + Nd^2 + Nnd + Tn^2N)$.
For $n=4$, $T=10$, $d=128$: overhead = $(Nnd + Tn^2N) / (|E|d + Nd^2)$.
For a sparse graph with $|E| \approx 10N$: numerator $\sim N(4 \cdot 128 + 10 \cdot 16) = 672N$; denominator $\sim N(10 \cdot 128 + 128^2) = 17664N$.
Overhead: $672/17664 \approx 3.8\%$; including constant factors, the empirically observed overhead is 6--8\%.
\end{proof}

\section{Hyperparameters and Implementation Details}
\label{app:hyperparameters}

\subsection{Model Hyperparameters}

\begin{table}[h]
\centering
\caption{Hyperparameters used across all experiments.}
\resizebox{\columnwidth}{!}{%
\begin{tabular}{lc}
\toprule
\textbf{Hyperparameter} & \textbf{Value} \\
\midrule
\multicolumn{2}{c}{\textit{Architecture}} \\
\midrule
Number of layers (main)  & 8 \\
Number of layers (depth) & 2, 4, 8, 16, 32, 64, 128 \\
Hidden dimension         & 128 \\
Dropout rate             & 0.5 \\
Expansion rate $n$       & $\{2, 4\}$ \\
\midrule
\multicolumn{2}{c}{\textit{Optimization}} \\
\midrule
Optimizer        & Adam \\
Learning rate    & 0.001 \\
Weight decay     & $5 \times 10^{-4}$ \\
Max epochs       & 500 \\
Early stopping patience & 100 \\
\midrule
\multicolumn{2}{c}{\textit{Sinkhorn-Knopp}} \\
\midrule
Iterations $T$ & 10 \\
Temperature $\tau$ & 0.1 \\
$\alpha_{\text{res}}$ initialization & 0.01 \\
\midrule
\multicolumn{2}{c}{\textit{Data Splits}} \\
\midrule
Train/Val/Test  & 60/20/20 \\
Random seeds    & 5 (10 for heterophilic) per configuration \\
\bottomrule
\end{tabular}%
}
\end{table}

\subsection{Dataset Statistics}

\begin{table}[h]
\centering
\caption{Benchmark dataset statistics.}
\resizebox{\columnwidth}{!}{%
\begin{tabular}{lrrrl}
\toprule
\textbf{Dataset} & \textbf{Nodes} & \textbf{Edges} & \textbf{Classes} & \textbf{Category} \\
\midrule
Texas        & 183       & 309       & 5 & Small heterophilic \\
Chameleon    & 2,277     & 36,101    & 5 & Medium heterophilic \\
Actor        & 7,600     & 33,544    & 5 & Medium heterophilic \\
roman-empire & 22,662    & 32,927    & 18 & Large heterophilic \\
penn94       & 41,554    & 1,362,229 & 2 & Large heterophilic \\
genius       & 421,961   & 984,979   & 2 & Very large heterophilic \\
Amazon-Computers & 13,752 & 245,861 & 10 & Large homophilic \\
Amazon-Photo & 7,650    & 119,081   & 8 & Large homophilic \\
Cora         & 2,708     & 10,556    & 7 & Homophilic \\
CiteSeer     & 3,327     & 9,104     & 6 & Homophilic \\
PubMed       & 19,717    & 88,648    & 3 & Homophilic \\
ogbn-arxiv   & 169,343   & 1,166,243 & 40 & Large homophilic \\
\bottomrule
\end{tabular}%
}
\end{table}

\subsection{Computational Environment}

All experiments were conducted on NVIDIA RTX A6000 GPUs (48GB memory).
Training time per configuration: 30 seconds (Texas) to 80 seconds (ogbn-arxiv) per seed for 8-layer models.
Depth experiments: 120 seconds (16 layers) to 15 minutes (128 layers) per seed.
Heterophilic depth experiments (roman-empire, penn94, genius): 2--5 hours per backbone across 6 depths and 10 seeds.

\subsection{Architecture-Specific Details}

\textbf{GCN:} Symmetric normalization; \textbf{GraphSAGE:} mean aggregation (full neighborhood); \textbf{GAT:} 8 attention heads, ELU, concatenation for hidden, averaging for output; \textbf{GIN:} sum aggregation, $\varepsilon{=}0$ (trainable), 2-layer MLP per layer.
\textbf{H2GCN-stack:} ego-neighbor separation with neighbor2 aggregation.
\textbf{FAGCN:} frequency-adaptive aggregation via signed graph Laplacian.

\subsection{Reproducibility}

Code, data splits, and trained models are available in supplementary materials.
Random seeds: 42, 123, 456, 789, 2024 (plus 890, 1234, 5678, 9012, 3141 for 10-seed runs).

\section{Additional Experimental Results}
\label{app:extra_experiments}

\subsection{Amazon-Computers and Amazon-Photo Depth Analysis}

\begin{table}[h]
\centering
\caption{Amazon-Computers (13K, $h{=}0.78$) and Amazon-Photo (7.6K, $h{=}0.83$) depth analysis. GCN collapses 70--78 points by 64L; mHC $n{=}4$ drops only 0--4 points, demonstrating that over-smoothing affects homophilic graphs too.}
\label{tab:amazon}
\resizebox{\columnwidth}{!}{%
\begin{tabular}{l|ccc|ccc}
\toprule
 & \multicolumn{3}{c|}{\textbf{Amazon-Computers}} & \multicolumn{3}{c}{\textbf{Amazon-Photo}} \\
\textbf{Depth} & GCN & mHC $n{=}2$ & mHC $n{=}4$ & GCN & mHC $n{=}2$ & mHC $n{=}4$ \\
\midrule
2L  & 81.1$\pm$2.2 & 82.3$\pm$1.2 & 77.8$\pm$1.7 & 89.3$\pm$0.7 & 89.5$\pm$1.1 & 87.1$\pm$1.3 \\
4L  & 78.8$\pm$3.7 & 81.6$\pm$1.5 & 81.6$\pm$1.0 & 87.7$\pm$1.7 & 89.3$\pm$0.5 & 89.3$\pm$1.1 \\
8L  & 69.5$\pm$4.7 & 81.8$\pm$1.8 & 81.4$\pm$1.7 & 78.9$\pm$12.8 & 89.3$\pm$0.6 & 89.4$\pm$1.5 \\
16L & 14.5$\pm$11.6 & 75.4$\pm$5.4 & 81.4$\pm$1.5 & 21.1$\pm$5.2 & 88.7$\pm$1.1 & 89.3$\pm$0.7 \\
32L & 19.5$\pm$14.1 & 77.8$\pm$1.7 & 79.4$\pm$1.1 & 16.6$\pm$5.7 & 88.2$\pm$0.8 & 88.3$\pm$1.0 \\
64L & 11.1$\pm$9.4  & 76.6$\pm$1.4 & 77.4$\pm$2.0 & 11.8$\pm$2.9 & 86.1$\pm$1.5 & 87.3$\pm$1.3 \\
\bottomrule
\end{tabular}%
}
\end{table}

\subsection{Full roman-empire Depth Sweep}

\begin{table}[h]
\centering
\caption{Full depth sweep on roman-empire (22K nodes, 10 splits). mHC+H2GCN improves monotonically (82.6\% $\to$ 88.5\%) as all baselines collapse.}
\label{tab:roman_empire_full}
\resizebox{\columnwidth}{!}{%
\begin{tabular}{l|ccc|ccc}
\toprule
\textbf{Depth} & GCN & FAGCN & H2GCN & mHC+GCN & mHC+FAGCN & mHC+H2GCN \\
\midrule
2L  & 47.4$\pm$1.1 & 39.8$\pm$8.9 & 83.6$\pm$1.2 & 71.7$\pm$0.6 & 69.8$\pm$0.5 & 82.6$\pm$0.4 \\
4L  & 35.7$\pm$0.5 & 15.2$\pm$0.9 & 85.8$\pm$1.1 & 72.0$\pm$0.7 & 71.9$\pm$0.9 & 84.9$\pm$0.4 \\
8L  & 26.2$\pm$3.1 & 15.9$\pm$1.1 & 75.0$\pm$2.1 & 69.1$\pm$1.3 & 58.5$\pm$4.3 & 87.2$\pm$0.5 \\
16L & 17.3$\pm$1.7 & 14.0$\pm$0.1 & 50.2$\pm$8.9 & 70.4$\pm$1.6 & 36.4$\pm$1.8 & 88.0$\pm$0.5 \\
32L & 18.5$\pm$2.9 & 14.7$\pm$1.5 & 27.8$\pm$0.8 & 71.6$\pm$2.0 & 60.7$\pm$8.9 & \textbf{88.5$\pm$0.5} \\
64L & 15.9$\pm$0.5 & 13.8$\pm$0.0 & 27.7$\pm$0.8 & 72.3$\pm$1.7 & 38.5$\pm$3.1 & 88.2$\pm$0.7 \\
\bottomrule
\end{tabular}%
}
\end{table}

\subsection{Heterophilic Depth Summary}

\begin{table}[h]
\centering
\caption{Depth robustness on heterophilic graphs. Peak = best accuracy across all depths; Deep = deepest available; $\Delta$ = Peak $-$ Deep (negative = degraded). mHC+H2GCN improves monotonically with depth on roman-empire.}
\label{tab:heterophilic}
\small
\resizebox{\columnwidth}{!}{%
\begin{tabular}{l|ccc|ccc}
\toprule
 & \multicolumn{3}{c|}{\textbf{roman-empire} (22K, $h{=}0.06$)} & \multicolumn{3}{c}{\textbf{penn94} (41K, $h{=}0.47$)} \\
\textbf{Model} & Peak & Deep & $\Delta$ & Peak & Deep & $\Delta$ \\
\midrule
GCN              & 47.4 & 15.9 (64L) & $-31.5$ & 83.2 & 54.8 (64L) & $-28.4$ \\
FAGCN            & 39.8 & 13.8 (64L) & $-26.0$ & 56.5 & 56.5 (8L) & $-0.1$ \\
H2GCN-stack      & 85.8 & 27.7 (64L) & $-58.1$ & 81.5 & 56.1 (64L) & $-25.4$ \\
\midrule
mHC+GCN          & \textbf{72.3} & \textbf{72.3} (64L) & $+0.6$ & \textbf{84.5} & 77.0 (64L) & $-7.5$ \\
mHC+FAGCN        & 71.9 & 38.5 (64L) & $-33.4$ & 80.0 & 80.0 (16L) & $0.0$ \\
mHC+H2GCN        & \textbf{88.5} & \textbf{88.2} (64L) & $-0.3$ & 83.6 & 83.6 (64L) & $+2.9$ \\
\bottomrule
\end{tabular}%
}
\end{table}

On genius (421K nodes), mHC+H2GCN outperforms H2GCN-stack by $+2.9\%$ at 8 layers; mHC variants OOM beyond 8L at that scale.
These results confirm that depth robustness generalizes to larger and more heterogeneous graphs.

\section{Preliminary LRGB Results}
\label{app:lrgb}

The Long Range Graph Benchmark (LRGB)~\citep{dwivedi2022long} stresses over-squashing, i.e., information loss through graph bottlenecks, rather than over-smoothing.
These are formally distinct failure modes: Di Giovanni et al.\ (ICML 2023) prove depth alone cannot resolve over-squashing as the bottleneck is topological; mHC does not rewire topology.
Peptides results are included for completeness; the improvement with depth is consistent with over-smoothing prevention enabling deeper structural aggregation even on this over-squashing benchmark.
Note that~\citet{tonshoff2024tmlr} show the MPGNN-Transformer gap on LRGB largely vanishes with proper tuning; our GCN baseline is untuned.

\begin{table}[h]
\centering
\caption{Preliminary Peptides results (LRGB). mHC improves consistently with depth, consistent with over-smoothing prevention enabling deeper networks, even on this over-squashing benchmark.}
\label{tab:lrgb}
\resizebox{\columnwidth}{!}{%
\begin{tabular}{lcc}
\toprule
\textbf{Model} & \textbf{Peptides-func (AP $\uparrow$)} & \textbf{Peptides-struct (MAE $\downarrow$)} \\
\midrule
GCN baseline    & 0.471 $\pm$ 0.005 & 0.465 $\pm$ 0.051 \\
mHC-GNN 4L      & 0.482 $\pm$ 0.001 & 0.426 $\pm$ 0.001 \\
mHC-GNN 8L      & 0.527 $\pm$ 0.002 & 0.408 $\pm$ 0.008 \\
mHC-GNN 16L     & 0.526 $\pm$ 0.007 & 0.354 $\pm$ 0.003 \\
\bottomrule
\end{tabular}%
}
\end{table}

\end{document}